\documentclass{article}

\usepackage{natbib}
\usepackage{graphicx} 
\usepackage{tikz}
\usepackage{booktabs}
\usepackage[left=1.5cm, right=2cm]{geometry}
\newgeometry{left=3cm,bottom=4cm}
\usepackage{algorithm,algpseudocode}

\usepackage{subcaption}
\usepackage{tikz}
\usetikzlibrary{math}
\usetikzlibrary{3d}
\usetikzlibrary{backgrounds}
\usetikzlibrary{calc}
\usetikzlibrary{arrows}
\usetikzlibrary{shapes.geometric,arrows.meta, angles}
\usetikzlibrary{angles,quotes} 

\usepackage{tkz-euclide}
\usepackage{xcolor}
\usepackage{amsmath,amsthm, amssymb,amsfonts}
\usepackage{times}
\usepackage{latexsym}
\usepackage{verbatim}

\usepackage{pgfplots}
\pgfdeclarelayer{bg}
\pgfsetlayers{bg,main}

\newtheorem{theorem}{Theorem}

\newtheorem{assumption}[theorem]{Assumption}

\newtheorem{rem}{Remark}
\newtheorem{definition}[theorem]{Definition}

\newtheorem{lemma}[theorem]{Lemma}
\newtheorem{proposition}[theorem]{Proposition}

\def\bb0{{\mathbb{0}}}

\def\bb{{\mathbf{b}}}

\def\b0{{\mathbf{0}}}

\def\b1{{\mathbf{1}}}

\def\bbR{{\mathbb{R}}}

\def\cR{\mathcal{R}}

\def\cV{\mathcal{V}}

\def\cX{\mathcal{X}}

\def\sf0{{\mathsf{0}}}

\def\nn{\nonumber}
\def\dist{\text{dist}}
\def\PP{\mathcal{P}}
\def\GG{\mathcal{G}}
\def\Area{\textsf{Area}}
\def\Perim{\textsf{Perimeter}}
\def\Convexhull{\textsf{ConvexHull}}
\def\Tau{\mathcal{T}}

\tikzset{every node/.style={font=\footnotesize}}

\usetikzlibrary{intersections} 
\usetikzlibrary{calc,angles,quotes}

\def\thetaa{45}

\title{Breaking the $O(\sqrt{T})$ Cumulative Constraint Violation Barrier while Achieving $O(\sqrt{T})$ Static Regret in Constrained Online Convex Optimization}
\author{Haricharan Balasundaram, Karthick Krishna Mahendran, Rahul Vaze}
\date{}

\pgfplotsset{compat=1.18}

\begin{document}

\maketitle

\begin{abstract} 

The problem of constrained online convex optimization is considered, where at each round, once a learner commits to an action $x_t \in \cX \subset \bbR^d$, a convex loss function $f_t$ and a convex constraint function $g_t$ that drives the constraint $g_t(x)\le 0$ are revealed. The objective is to simultaneously minimize the static regret and cumulative constraint violation (CCV) compared to the benchmark that knows the loss functions and constraint functions $f_t$ and $g_t$ for all $t$ ahead of time, and chooses a static optimal action that is feasible with respect to all $g_t(x)\le 0$. In recent prior work \cite{Sinha2024}, algorithms with simultaneous regret of $O(\sqrt{T})$  and CCV of $O(\sqrt{T})$ or (CCV of $O(1)$ in specific cases \cite{vaze2025osqrttstaticregretinstance}, e.g. when $d=1$) have been proposed. It is widely believed that CCV is  $\Omega(\sqrt{T})$ for all algorithms that ensure that regret is $O(\sqrt{T})$ with the worst case input for any $d\ge 2$. In this paper, we refute this  and show that the algorithm of \cite{vaze2025osqrttstaticregretinstance} simultaneously achieves regret of $O(\sqrt{T})$ regret and CCV of $O(T^{1/3})$ when $d=2$. 

\end{abstract}

\section{Introduction}

In this paper, we consider the constrained version of the standard online convex optimization (OCO) framework, called constrained OCO or COCO. In COCO, on every round $t,$ the online algorithm first chooses an admissible action $x_t \in \mathcal{X} \subset \bbR^d$, and then the adversary chooses a convex loss/cost function $f_t: \mathcal{X} \to \mathbb{R}$ and a constraint function of the form $g_{t}(x) \leq 0,$ where $g_{t}: \mathcal{X} \to \mathbb{R}$ is a convex function. Let $\mathcal{X}^\star$ be the feasible set consisting of all admissible actions that satisfy all constraints $g_{t}(x) \leq 0, t\in [T]$. We work under the standard assumption that $\mathcal{X}^\star$ is not empty (called the {\it feasibility assumption}). 

Since $g_{t}$'s are revealed after the action $x_t$ is chosen, an online algorithm need not necessarily take feasible actions on each round, and in addition to the static regret 
\begin{equation} \label{eqn:intro-regret-def}
	\textrm{Regret}_{[1:T]} \equiv \sup_{\{f_t\}_{t=1}^T} \sup_{x^\star \in \mathcal{X^\star}} \textrm{Regret}_T(x^\star), ~\textrm{where~}\textrm{Regret}_T(x^\star) \equiv \sum_{t=1}^T f_t(x_t) - \sum_{t=1}^T f_t(x^\star),
\end{equation}
an additional metric of interest  is the total cumulative constraint violation (CCV) defined as 
  \begin{eqnarray} \label{eqn:intro-gen-oco-goal}
 	\textrm{CCV}_{[1:T]}  \equiv \sum_{t=1}^T \max(g_{t}(x_t),0) = \sum_{t=1}^T (g_{t}(x_t))^+. 
	\end{eqnarray}
The goal is to design an online algorithm to simultaneously achieve a small regret \eqref{eqn:intro-regret-def} with respect to any admissible benchmark $x^\star \in \mathcal{X}^\star$ and a small CCV \eqref{eqn:intro-gen-oco-goal}. 

With constraint sets ${\cal G}_t = \{x\in \cX : g_t(x)\le 0\}$ being convex for all $t$, and the assumption $\mathcal{X}^\star = \cap_t G_t  \neq \varnothing $  implies that sets $S_t = \cap_{\tau=1}^t {\cal G}_\tau$ are convex and are nested, i.e. $S_t\subseteq S_{t-1}$ and $\mathcal{X}^\star \in S_t$ for all $t$. Essentially, set $S_t$'s are sufficient to quantify the CCV.

\subsection{Prior Work}

\textbf{Constrained OCO (COCO): (A) Time-invariant constraints:} COCO with time-invariant constraints, \emph{i.e.,} $g_{t} = g, \forall \ t$ \citep{yuan2018online, jenatton2016adaptive, mahdavi2012trading, yi2021regret} has been considered extensively, where functions $g$ are assumed to be known to the algorithm \emph{a priori}. The algorithm is allowed to take actions that are infeasible at any time to avoid the costly projection step of the vanilla projected OGD algorithm and the main objective was to design an \emph{efficient} algorithm  with a small regret and CCV while avoiding  the explicit projection step. 

\textbf{(B) Time-varying constraints:} The more difficult question is solving COCO problem when the constraint functions, \emph{i.e.}, $g_{t}$'s, change arbitrarily with time $t$. In this setting, all prior work on COCO made the feasibility assumption. One popular algorithm for solving COCO considered a Lagrangian function optimization that is updated using the primal and dual variables \citep{yu2017online, pmlr-v70-sun17a, yi2023distributed}. Alternatively, \citet{neely2017online} and \cite{georgios-cautious} used the drift-plus-penalty (DPP) framework  \citet{neely2010stochastic} to solve the COCO, but which needed additional assumption, e.g. the Slater's condition in \citet{neely2017online} and with weaker form of the feasibility assumption \cite{neely2017online}'s. \citet{guo2022online} obtained the bounds similar to \citet{neely2017online} but without assuming Slater's condition. However, the algorithm \citet{guo2022online} was quite computationally intensive since it requires solving a convex optimization problem on each round. 

Finally, very recently, the state of the art guarantees on simultaneous bounds on regret $O (\sqrt{T})$ and CCV $O (\sqrt{T}\log T)$ for COCO were derived in \cite{Sinha2024} with a very simple algorithm that combines the loss function at time $t$ and the CCV accrued till time $t$ in a single loss function, and then executes the online gradient descent (OGD) algorithm on the single loss function with an adaptive step-size. Moreover, the result of \cite{Sinha2024} was shown to be tight in \cite[Lemma 6]{vaze2025osqrttstaticregretinstance} for an explicit input construction for which the algorithm of \cite{Sinha2024} has CCV of $\Omega(\sqrt{T}\log T)$ and that too for $d=1$. This was a consequence of the algorithm in \cite{Sinha2024} disregarding the geometry of the nested sets $S_t$'s and attempting to minimize both the regret and CCV for the worst case input.

A geometry-aware algorithm was proposed in \cite{vaze2025osqrttstaticregretinstance} that first takes an OGD step with respect to the most recently revealed loss function $f_{t-1}$ and then projects that on to the most recently revealed constraint set $S_{t-1}$. For this algorithm, an $O (\sqrt{T})$ regret bound and an instance specific CCV bound was established. In particular, the CCV was shown to be $O(1)$ when the sets are `nice' e.g., spheres or axis-aligned polygons, while in the general case, the CCV was shown to be $O(\cV)$, where $\cV$ is a parameter that depends on the distance between successive sets $S_t$'s and the shapes of sets $S_t$'s,  the dimension of the action space, and the diameter of the action space. Since no universal bound on $\cV$ was derived, CCV bound of $\min (\cV, O(\sqrt{T} \log T))$ was established by switching to the algorithm of  \cite{Sinha2024} in case $\cV$ exceeded $O(\sqrt{T})$. Thus, in the worst case, the bounds of \cite{Sinha2024} and \cite{vaze2025osqrttstaticregretinstance} are identical (regret of $O (\sqrt{T})$ and CCV of $O(\sqrt{T} \log T)$), however, for simple instances with $d=1$ for which the CCV bound of $O (\sqrt{T})$ \cite{Sinha2024} is tight, the CCV bound of \cite{vaze2025osqrttstaticregretinstance} is $O(1)$. Please refer to Table \ref{gen-oco-review-table} for a brief summary of the prior results. 

In a complementary direction, \cite{sinha2025tildeosqrtt} breached the CCV bound of $O(\sqrt{T})$ by trading it off with the regret. Specifically, an algorithm was proposed that achieves $\tilde O(\sqrt{dT} + T^\beta)$ regret and $\tilde O(d T^{1 - \beta})$ CCV, where $d$ is the dimension of the decision set and $\beta$ is a tunable parameter. This is achieved by a reduction to the constrained experts problem.

\begin{table}[t]
  \begin{tabular}{llllll}
    \toprule
    Reference  & Regret & CCV & Complexity per round\\
    \midrule
    \citet{neely2017online}  & $O(\sqrt{T})$ & $O(\sqrt{T})$ & Conv-OPT, Slater's condition \\
    \citet{guo2022online}  & $O(\sqrt{T})$ & $O(T^{\frac{3}{4}})$ & Conv-OPT \\
    \citet{yi2023distributed}  & $O(T^{\max(\beta, 1-\beta)})$ & $O(T^{1-\beta/2})$ & Conv-OPT  \\
    \citet{Sinha2024} & $O(\sqrt{T})$ & $O(\sqrt{T}\log T)$ & Projection \\  
    \cite{sinha2025tildeosqrtt} & $\tilde O(T^{\max \left(\beta, \tfrac 12 \right)})$ & $\tilde O(T^{1 - \beta})$ & $O(T^d)$ \\    
    \cite{vaze2025osqrttstaticregretinstance} &   $O(\sqrt{T})$ & $O(\min\{\cV,\sqrt{T}\log T\})$ & Projection \\
    \textbf{This Paper} (2-dimensions) & $O(\sqrt{T})$ & $O(T^{\tfrac 13})$ & Projection \\
    \bottomrule
  \end{tabular}
    
    \caption{Summary of the results on COCO for arbitrary time-varying convex constraints and convex cost functions. In the above table, $0\leq \beta \leq 1$ is an adjustable parameter. Conv-OPT refers to solving a constrained convex optimization problem on each round. Projection refers to the Euclidean projection operation on the convex set $\mathcal{X}$. The CCV bound for \cite{vaze2025osqrttstaticregretinstance} is stated in terms of $\cV$, which can be $O(1)$ or depend on the shape of convex sets $S_t$.}
    \label{gen-oco-review-table}
\end{table}

In comparison to the above discussed upper bounds, the best known simultaneous lower bound on regret and CCV \cite{Sinha2024}  for COCO is $\cR_{[1:T]} = \Omega(\sqrt{d})$ and $\text{CCV}_{[1:T]} = \Omega(\sqrt{d})$, where $d$ is the dimension of the action space $\cX$. Without constraints, $\cR_{[1:T]} = \Omega(\sqrt{T})$ for all online algorithms \cite[Theorem 3.2]{Hazanbook}, which trivially applies to COCO as well. Combining these two lower bounds by utilizing the lower bound from \cite[Theorem 3.2]{Hazanbook} in $\frac d2$ dimensions and the lower bound from \cite{Sinha2024} in the other $\frac d2$ dimensions yields a lower bound of $\cR_{[1:T]} = \Omega(\sqrt{T})$ and $\text{CCV}_{[1:T]} = \Omega(\sqrt{d})$ simultaneously.

\subsection{Main open question and Our Contribution}

The main open question in COCO is whether there exists an algorithm that can simultaneously achieve $\cR_{[1:T]} =O(\sqrt{T})$ and $\text{CCV}_{[1:T]} = o(\sqrt{T})$. Before this work, it was widely believed that this was not possible when $d\ge 2$. 

In this paper, we answer this question in the affirmative and show that \cite[Algorithm 2]{vaze2025osqrttstaticregretinstance} simultaneously achieves $\cR_{[1:T]} =O(\sqrt{T})$ and $\text{CCV}_{[1:T]} = O(T^{1/3})$ when $d = 2$. Even though our result holds only for $d=2$, it overcomes a fundamental bottleneck, and the analysis structurally improves upon the analysis of \cite{vaze2025osqrttstaticregretinstance} that also used similar geometric ideas. 

\cite[Algorithm 2]{vaze2025osqrttstaticregretinstance} is actually very simply: at time $t$, first take a OGD step with respect to most recently revealed function $f_{t-1}$, and then project that on to the most recent constraint set $S_t$. To derive our result, we exploit the fact that by taking projections from points in $S_{t-1}$ on to $S_t$, where $S_t$'s are nested, either the perimeter or the area of $S_t$ decreases {\it sufficiently} in each step compared to $S_{t-1}$ when $d=2$. Since both the area and diameter of the mother set $S_1$ is at most $D^2$ and $D$, respectively, we get our result. In contrast the analysis in \cite{vaze2025osqrttstaticregretinstance} bounded the decrease of {\it average width} \cite{eggleston1966convexity} going from $S_{t-1}$ and $S_t$ and derived an instance specific bound on the CCV that is valid for all $d$.

We conjecture is that in fact that the \cite[Algorithm 2]{vaze2025osqrttstaticregretinstance} has simultaneous $\cR_{[1:T]} =O(\sqrt{T})$ and $\text{CCV}_{[1:T]} = O(1)$ at least for $d=2$, and we need more fine grained analysis that will amortize CCV across time slots.
\section{COCO Problem}

We consider the COCO problem as defined in the Introduction, where the objective is to design online algorithms that simultaneously minimize static regret \eqref{eqn:intro-regret-def} and CCV \eqref{eqn:intro-gen-oco-goal}. We next state the standard assumptions made in the literature while studying the COCO problem \cite{guo2022online, yi2021regret, neely2017online, Sinha2024}.

\begin{assumption}[Convexity] \label{cvx}
$\mathcal{X} \subset \bbR^d$ is the admissible set that is closed, convex and has a finite Euclidean diameter $D$. The cost function $f_t: \mathcal{X} \mapsto \mathbb{R}$ and the constraint function $g_{t}: \mathcal{X} \mapsto \mathbb{R}$ are convex for all $t\geq 1$. 
\end{assumption}

\begin{assumption}[Lipschitzness] \label{bddness}
All cost functions $\{f_t\}_{t\geq 1}$ and the constraint functions $\{g_{t}\}_{ t\geq 1}$'s are $G$-Lipschitz, i.e., for any $x, y \in \mathcal{X},$ we have 
\begin{eqnarray*}
|f_t(x)-f_t(y)| \leq G||x-y||,~|g_{t}(x)-g_{t}(y)| \leq G||x-y||, ~\forall t\geq 1.
\end{eqnarray*}
\end{assumption}

\begin{assumption}[Feasibility] \label{feas-constr}
With ${\GG}_t = \{x\in \cX : g_t(x)\le 0\}$, we assume that $\mathcal{X}^\star = \cap_{t=1}^T \GG_t \neq \varnothing$. Any action $x^\star \in \cX^\star$ is defined to be feasible.
\end{assumption}
The feasibility assumption distinguishes the cost functions from the constraint functions and is common across all previous literature on COCO \cite{guo2022online, neely2017online, yu2016low,yuan2018online,yi2023distributed, georgios-cautious,Sinha2024}. 

Recall that ${\cal G}_t = \{x\in \cX : g_t(x)\le 0\}$ and sets $S_t = \cap_{\tau=1}^t {\cal G}_\tau$ are convex and are nested, i.e. $S_t\subseteq S_{t-1}$ and $\mathcal{X}^\star \in S_t$ for all $t$. Next, we define the projection of $x$ onto a set $\chi$ to be $\PP_\chi(x)$ and the projection distance to be $\dist(x,\chi)$.

\begin{definition} For a convex set $\chi$ and a point $x\notin \chi$, 
\begin{align}
 \PP_\chi(x) &= \arg\min_{y\in \chi} || x-y|| \\
 \dist(x,\chi) &= \min_{y\in \chi} || x-y|| = ||x - \PP_\chi(x)||
\end{align}
\end{definition}
Throughout, we take $||\cdot ||$ to be the $\ell_2$ norm. Since various $\ell_p$--norms have an at most $\sqrt{d} = O(1)$ looseness, this assumption does not violate the CCV bounds.

Since $g_t$ is Lipschitz, $[g_t(x)]^+ \le G \, \dist(x, \mathcal{G}_t) \le G \, \dist(x, S_t)$. This inequality is also tight in the sense that $g_t(x) = G \, \dist(x, S_t)$ itself is a valid constraint function (it is convex and $G$--Lipschitz) for a given set $S_t$. 
The total constraint violation upto to time $t$ for any algorithm is
\begin{align}\nn
\text{CCV}_{[1:t]} &= \sum_{\tau=1}^t [g_\tau(x_\tau)]^+ \\ \nn
&\le G\sum_{\tau=1}^t \text{dist}(x_{\tau}, S_{\tau}), \\ \nn
&= G \sum_{\tau=1}^t ||x_\tau - \PP_{S_t}(x_\tau)||, \\
&=G \sum_{\tau=1}^t ||p_{\tau}||,
\label{eqn:CCV_dist_relation}
\end{align}
where $p_\tau = \PP_{S_t}(x_\tau) - x_\tau \ \forall \ \tau \in [T]$.

We define $||p_\tau||$ to be the projection cost at time $\tau$ and controlling $\sum_{\tau=1}^T ||p_{\tau}||$ is thus sufficient to control $\text{CCV}_{[1:T]}$.

\section{Algorithm for solving COCO}

In this section, we recall the algorithm \cite[Algorithm 2]{vaze2025osqrttstaticregretinstance} for solving COCO.

\begin{algorithm}[H]
   \caption{Online Algorithm for COCO}
   \label{coco_alg_1}
\begin{algorithmic}[1]
   \State {\bfseries Input:} Sequence of convex cost functions $\{f_t\}_{t=1}^T$ and constraint functions $\{g_t\}_{t=1}^T,$ $G=$ a common Lipschitz constant,  $d$ dimension  of the admissible set $\mathcal{X},$ step size $\eta_t = \frac{D}{G \sqrt{t}}$. 
    $D=$ Euclidean diameter of the admissible set $\mathcal{X},$ $\mathcal{P}_\mathcal{X}(\cdot)=$ Euclidean projection operator on the set $\mathcal{X}$,      \State {\bfseries Initialization:} Set $ x_1 \in \mathcal{X}$ arbitrarily
   \State {\bf For} \ {$t=1:T$}
   \State \quad Play $x_t,$ observe $f_t, g_t,$ incur a cost of $f_t(x_t)$ and constraint violation of $(g_t(x_t))^+$
   \State \quad ${\cal G}_t = \{x\in \cX : g_t(x)\le 0\}$ and $S_t = \cap_{\tau=1}^t {\cal G}_\tau$ 
    \State \quad $y_{t + 1} =  \mathcal{P}_{S_{t-1}}\left(x_t - \eta_t \nabla f_t(x_t)\right)$
   \State \quad $x_{t+1} =  \mathcal{P}_{S_t}\left(y_{t+1}\right)$
   \State {\bf EndFor}
\end{algorithmic}
\end{algorithm}

Algorithm \ref{coco_alg_1} is essentially an online gradient descent (OGD algorithm) which first takes an OGD step from the previous action $x_{t}$ with respect to $f_{t}$ with appropriate step-size which is then projected onto $S_{t-1}$ to get $y_{t + 1}$, and then projects $y_{t + 1}$ onto the most recently revealed set $S_{t}$ to get $x_{t + 1}$, the new action to be played at time $t + 1$.

The following regret guarantee was derived in \cite{vaze2025osqrttstaticregretinstance} for Algorithm \ref{coco_alg_1}:
\begin{lemma}\cite[Lemma 7]{vaze2025osqrttstaticregretinstance} \label{lem:regretbound}
The $\textrm{Regret}_{[1:T]}$ for Algorithm \ref{coco_alg_1} is $O(\sqrt{T})$.
\end{lemma}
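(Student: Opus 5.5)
The plan is to reproduce the standard projected online gradient descent regret analysis, using only that $x^\star \in \cX^\star \subseteq S_t \subseteq S_{t-1}$ for every $t$. Fix any feasible $x^\star \in \cX^\star$. By convexity of $f_t$,
\begin{align*}
f_t(x_t) - f_t(x^\star) \le \langle \nabla f_t(x_t), x_t - x^\star\rangle,
\end{align*}
so it suffices to bound $\sum_{t=1}^T \langle \nabla f_t(x_t), x_t - x^\star\rangle$ by $O(\sqrt{T})$.

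The key step is that both projections in Algorithm \ref{coco_alg_1} are non-expansive with respect to $x^\star$.
\begin{itemize}
\item Since $x^\star \in S_{t-1}$ and $S_{t-1}$ is closed and convex, the Pythagorean property of Euclidean projection gives $\|y_{t+1} - x^\star\| \le \|x_t - \eta_t \nabla f_t(x_t) - x^\star\|$.
\item Since $x^\star \in S_t$, the same property gives $\|x_{t+1} - x^\star\| \le \|y_{t+1} - x^\star\|$.
\end{itemize}
Chaining these and expanding the square yields
\begin{align*}
\|x_{t+1}-x^\star\|^2 \le \|x_t - x^\star\|^2 - 2\eta_t \langle \nabla f_t(x_t), x_t - x^\star\rangle + \eta_t^2 \|\nabla f_t(x_t)\|^2 .
\end{align*}
Rearranging, and using $\|\nabla f_t\| \le G$ (Assumption \ref{bddness}), gives
\begin{align*}
\langle \nabla f_t(x_t), x_t - x^\star\rangle \le \frac{\|x_t - x^\star\|^2 - \|x_{t+1}-x^\star\|^2}{2\eta_t} + \frac{\eta_t G^2}{2}.
\end{align*}

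Next, sum over $t$ and handle the time-varying step size by Abel summation. Since $\eta_t$ is non-increasing, the telescoping term is at most
\begin{align*}
\frac{\|x_1 - x^\star\|^2}{2\eta_1} + \sum_{t=2}^T \|x_t - x^\star\|^2\left(\frac{1}{2\eta_t} - \frac{1}{2\eta_{t-1}}\right) \le \frac{D^2}{2\eta_T},
\end{align*}
using $\|x_t - x^\star\| \le D$. Two facts justify this bound on the distances. First, every $x_t$ for $t \ge 2$ lies in $S_{t-1} \subseteq \cX$. Second, $x_1 \in \cX$ by initialization, so $x_1$ and $x^\star$ are within the diameter $D$.

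With $\eta_t = D/(G\sqrt{t})$, the first term is $DG\sqrt{T}/2$. The second term is $\frac{G^2}{2}\sum_t \eta_t \le DG\sqrt{T}$. The total is therefore at most $\tfrac{3}{2}DG\sqrt{T}$, uniformly over $x^\star \in \cX^\star$ and over the loss sequences $\{f_t\}$.

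The only step requiring care is the projection step, and in particular the choice to project onto $S_{t-1}$ and then onto $S_t$ rather than onto $\cX$. The non-expansiveness argument needs $x^\star$ to belong to each set projected onto. This holds precisely because of the feasibility assumption (Assumption \ref{feas-constr}) together with the nesting $\cX^\star \subseteq S_t \subseteq S_{t-1}$. A further minor point is the index convention $S_0 = \cX$ for the first round. Beyond that, the argument is the textbook OGD bound.
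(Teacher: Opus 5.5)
Your proof is correct and is essentially the argument behind the cited \cite[Lemma 7]{vaze2025osqrttstaticregretinstance}; this paper gives no proof of its own and only cites that lemma. The argument runs as follows: both projections are non-expansive toward any $x^\star \in \cX^\star \subseteq S_t \subseteq S_{t-1}$, so the textbook projected-OGD telescoping with $\eta_t = D/(G\sqrt{t})$ goes through unchanged and yields $\tfrac{3}{2}GD\sqrt{T}$.
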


In the next section, we show that for $d=2$, $\textrm{CCV}_{[1:T]} = O(T^{1/3})$ for Algorithm \ref{coco_alg_1} fundamentally improving upon the worst case guarantee of $\textrm{CCV}_{[1:T]} = O(T^{1/2})$ derived in \cite{vaze2025osqrttstaticregretinstance}.
\color{black}

\subsection{Bounding $\text{CCV}_{[1:T]}$  \eqref{eqn:CCV_dist_relation} for $d=2$}
The main theorem of this paper is as follows.
\begin{theorem} \label{thm:theorem_CCV_bound} For $d = 2$, the CCV of Algorithm \ref{coco_alg_1} is upper bounded as
\begin{equation}
\mathrm{CCV}_{[1:T]} \le
\left(\frac 32 \sqrt{2} \pi \right)^{2/3} G T^{1/3} D \le 4 G T^{1/3} D.    
\end{equation}
\end{theorem}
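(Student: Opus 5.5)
We will bound $\sum_t \|p_t\|$, where $p_t$ is the displacement from $y_{t+1}\in S_{t-1}$ to $x_{t+1}=\PP_{S_t}(y_{t+1})$. Up to an index shift, the CCV at round $t+1$ is at most $G\,\dist(x_{t+1},S_{t+1})$, and this is the projection cost of a point already in $S_t$ onto $S_{t+1}$; the additive first-round term is absorbed since $\|p\|\le D$. So the task is purely geometric. Given nested planar convex sets $S_1\supseteq S_2\supseteq\cdots$ and points $z_t\in S_{t-1}$, bound $\sum_t \dist(z_t,S_t)$. The plan is a threshold argument. Fix $\epsilon>0$ and call a round \emph{large} if $h_t:=\dist(z_t,S_t)>\epsilon$ and \emph{small} otherwise. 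Small rounds contribute at most $\epsilon T$ in total.

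For a large round, let $q=\PP_{S_t}(z_t)$. The separating line through $q$ orthogonal to $z_t-q$ leaves $S_t$ on one side and $z_t$ at distance $h_t$ on the other. Since $S_t\subseteq S_{t-1}$ and $z_t\in S_{t-1}$, the set $S_{t-1}$ contains $\Convexhull(S_t\cup\{z_t\})$. Hence $\Area(S_{t-1})-\Area(S_t)\ge \Area(\Convexhull(S_t\cup\{z_t\})\setminus S_t)$. This region contains the triangle with apex $z_t$ and base the chord of $S_t$ near $q$.

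The key dichotomy follows from the two tangent lines from $z_t$ to $S_t$, with tangency points $a,b$. \textbf{Case (i):} if $|ab|$ is large, say $|ab|\ge \ell$, the triangle $z_tab$ has area about $\tfrac12 h_t\ell$ (its height from $z_t$ is at least $h_t$), so the area drops by $\gtrsim h_t\ell$. \textbf{Case (ii):} if $|ab|$ is small, the perimeter should drop. Replacing the path $a\to z_t\to b$ by the boundary arc of $S_t$ between $a$ and $b$ passing through $q$, convexity gives $|az_t|+|z_tb|-\text{arc}(a,b)\ge$ something like $\tfrac{h_t^2}{|ab|}$. Since the perimeter is monotone under inclusion of convex sets, $\Perim(S_{t-1})\ge\Perim(\Convexhull(S_t\cup\{z_t\}))$, so the perimeter falls by $\gtrsim h_t^2/\ell$.

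Summing over large rounds, area is at most $\pi D^2/4$ and perimeter at most $\pi D$. Let $A$ be the set of case (i) rounds and $P$ the set of case (ii) rounds. Then $\sum_{A} h_t\le cD^2/\ell$. By Cauchy–Schwarz, $\sum_{P} h_t\le\sqrt{|P|\sum h_t^2}\le\sqrt{T\,\ell\,\pi D}$. Optimizing $\ell$ gives $D^2/\ell=\sqrt{T\ell D}$, i.e. $\ell=DT^{-1/3}$, and the total is $O(DT^{1/3})$. Cases can alternatively be chosen per round by comparing the two bounds, which yields the constant $(\tfrac32\sqrt2\pi)^{2/3}$ after balancing. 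The threshold $\epsilon$ may turn out unnecessary, since both cases hold for every round.

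The main obstacle is the case (ii) perimeter inequality: showing rigorously that $|az_t|+|z_tb|-\text{arc}_{S_t}(a,b)\gtrsim h_t^2/|ab|$ up to constants. The arc lies inside triangle $z_tab$, so it has length at least $|ab|$ and at most $|az_t|+|z_tb|$. The difficulty is that it could hug the two sides. The first approach will be to use the fact that the arc also lies beyond the supporting line at $q$, at distance $h_t$ from $z_t$. This confines the arc to the quadrilateral cut off by that line, whose boundary is shorter than the two-sided path by roughly $h_t\tan$ of the half-angles. Combining that with $|az_t|,|z_tb|\ge h_t$ and the elementary estimate $\sqrt{x^2+h^2}-x\ge h^2/(2\sqrt{x^2+h^2})$ should produce the needed $h_t^2/\mathrm{diam}$-type decrement. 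If this direct bound fails because of degenerate, very thin angles, the fallback is to pick the case by whichever of the area decrease ($\ge h_t|ab|/2$) or the perimeter decrease is larger, so that each round's cost is charged to the more favorable potential.
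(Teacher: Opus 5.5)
\paragraph{Verdict: Case (i) is false as stated.} Your overall structure matches the paper's: an area-versus-perimeter dichotomy, total budgets $\pi D^2/4$ and $\pi D$, then Cauchy--Schwarz or H\"older to reach $T^{1/3}$. The problem is that you run the dichotomy on the wrong width, and Case (i) fails.

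The triangle $z_tab$, with $a,b$ the tangency points of the tangents from $z_t$ to $S_t$, is \emph{not} contained in $S_{t-1}\setminus S_t$. The arc of $\partial S_t$ from $a$ to $b$ through $q$ lies inside it, as you note yourself, and it can fill almost all of it.

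Concretely, take $z_t=(0,h)$ and $S_t=\Convexhull\{(\pm s,0),(\pm 1,-1)\}$ with $s$ slightly below $h/(1+h)$. Let $S_{t-1}=\Convexhull(S_t\cup\{z_t\})$, the triangle with vertices $z_t,(\pm1,-1)$. Then:
\begin{itemize}
\item $q=(0,0)$ and $h_t=h$;
\item the tangency points are $(\pm1,-1)$, so $|ab|=2$;
\item $\Area(S_{t-1})-\Area(S_t)=h-s\approx h^2$.
\end{itemize}
This is far below $\tfrac12 h_t|ab|=h$, and below $h_t\ell$ whenever $h\ll\ell$. Your parenthetical claim that the height of $z_t$ over line $ab$ is at least $h_t$ also fails in general, because the foot of the perpendicular can land outside $[a,b]$. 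The ``chord of $S_t$ near $q$'' does not help either: $S_t\cap\ell_t$ can be the single point $q$. Your fallback of charging each round to the larger of the two decreases relies on the same false area bound $h_t|ab|/2$, so it does not close the gap.

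\paragraph{The missing idea.} Measure the width along the supporting line $\ell_t$ (through $q$, orthogonal to $z_t-q$), but as a chord of the \emph{larger} set. Let $a_t,b_t$ be the endpoints of $\ell_t\cap S_{t-1}$ and $w_t=\|a_t-b_t\|$.
\begin{itemize}
\item \emph{Area.} The open triangle $z_ta_tb_t$ lies in $S_{t-1}$ and strictly on the far side of $\ell_t$ from $S_t$, so $\Delta_t\ge\tfrac12 h_tw_t$.
\item \emph{Perimeter.} Put $U_t=S_{t-1}\cap H_t\supseteq S_t$. Then $\Convexhull(U_t\cup\{z_t\})\subseteq S_{t-1}$, and the two boundaries differ only by replacing $\overline{a_tb_t}$ with $\overline{z_ta_t}\cup\overline{z_tb_t}$. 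Hence $\delta_t\ge\|z_t-a_t\|+\|z_t-b_t\|-w_t\ge\sqrt{h_t^2+w_t^2/4}-w_t/2$.
\end{itemize}
This removes the arc-length estimate you flagged as the main obstacle. In the example above, $w_t\approx 2h$, and the perimeter branch is the one that fires.

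Note also that this perimeter decrement is of order $h_t^2/w_t$ only when $w_t\gtrsim h_t$; otherwise it is of order $h_t$. So your Case (ii) bookkeeping should use $\min(h_t,h_t^2/\ell)$, not $h_t^2/\ell$.

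With these repairs, your threshold-plus-Cauchy--Schwarz summation with $\ell=DT^{-1/3}$ gives $O(DT^{1/3})$. That is equivalent to the paper's route: the per-round bound $\max(\delta_t,\alpha\Delta_t)\ge\|p_t\|^{3/2}/\sqrt{2D}$ at $\alpha=2/D$, followed by H\"older, which produces the stated constant $(\tfrac32\sqrt2\pi)^{2/3}$.
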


The rest of the section is dedicated for proving Theorem \ref{thm:theorem_CCV_bound}.
To start, following \eqref{eqn:CCV_dist_relation}, without loss of generality, we restrict our attention to the indices $t$  for which $||p_t|| > 0$.
Next, with reference to Fig. \ref{fig:schematic_1}, we define some useful quantities.
\begin{enumerate}
    \item Recall that 
    \begin{equation}
        p_t = \PP_{S_t}(x_t) - x_t,        
    \end{equation}
    and that $\|p_t\|$ denotes the projection cost incurred at time $t$.

    \item Let $\ell_t$ be the line passing through $\PP_{S_t}(x_t)$ that is perpendicular to the vector $p_t$.

    \item Define $H_t$ as the closed half-space bounded by $\ell_t$ and not containing $x_t$. Formally,
    \begin{equation}
        H_t = \{z \in \mathbb{R}^2 : p_t \cdot z \ge p_t \cdot \left(\PP_{S_t}(x_t) \right) \},
    \end{equation}
    where $(\cdot)$ represents the dot product.

    \item Let $a_t$ and $b_t$ denote the two points at which the line $\ell_t$ intersects the body $S_{t-1}$ and let $w_t = \|a_t - b_t\|$.
\end{enumerate}

\begin{figure}[H]
\centering
\begin{tikzpicture}[scale=0.8,>=stealth]
  \clip (-6,-4) rectangle (6,6);

  \begin{scope}[rotate=\thetaa]

    \coordinate (origin) at (0,0);
    \coordinate (xt)     at (0,3.2);
    \coordinate (proj)   at (0,1.5);

    \def\RoutY{3.2}
    \def\RinY{1.5}
    \def\RoutLeftX{3.2}
    \def\RinLeftX{2.2}
    \def\RoutRightX{4.5}
    \def\RinRightX{3.5}

    \coordinate (rectSW) at (-\RoutLeftX-1,-10);
    \coordinate (rectNE) at (\RoutRightX+1,1.5); 

    \definecolor{lowerregion}{RGB}{231,76,60}
    \definecolor{outerhex}{RGB}{52,152,219}

    \coordinate (HtLeft)  at (-4.2, 1.5);
    \coordinate (HtRight) at ( 5.5, 1.5);

    \coordinate (Qtop)      at (1.0, 2.6);
    \coordinate (QrightUp)  at (\RoutRightX, 1.4);
    \coordinate (QrightLow) at (\RoutRightX,-1.4);
    \coordinate (Qbottom)   at (0,-2.4);
    \coordinate (QleftLow)  at (-\RoutLeftX,-1.4);
    \coordinate (QleftUp)   at (-\RoutLeftX,\RoutY);

    \path[fill=lowerregion!40, fill opacity=0.35]
      (rectSW) rectangle (rectNE);

    \node[lowerregion, left] at (HtRight) {$\ell_t$};

    \node[lowerregion] at (1,-5) {$H_t$};

    \path[fill=blue!20]
      (0,\RinY)
        arc[start angle=90, end angle=270,
            x radius=\RinLeftX, y radius=\RinY]
      --
      (0,-\RinY)
        arc[start angle=-90, end angle=90,
            x radius=\RinRightX, y radius=\RinY]
      -- cycle;

    \draw[name path=outerLeft, black, thick]
      (0,\RoutY) arc[start angle=90, end angle=270,
                     x radius=\RoutLeftX, y radius=\RoutY];

    \draw[name path=outerRight, black, thick]
      (0,-\RoutY) arc[start angle=-90, end angle=90,
                      x radius=\RoutRightX, y radius=\RoutY];

    \draw[black, thick]
      (0,\RinY) arc[start angle=90, end angle=270,
                    x radius=\RinLeftX, y radius=\RinY];

    \draw[black, thick]
      (0,-\RinY) arc[start angle=-90, end angle=90,
                     x radius=\RinRightX, y radius=\RinY];

    \draw[name path=htline, lowerregion, thick]
      (HtLeft) -- (HtRight);

    \path[name intersections={of=htline and outerLeft,  by=a_t}];
    \path[name intersections={of=htline and outerRight, by=b_t}];

    \node[circle,fill=lowerregion,inner sep=1pt, label=left:$a_t$]  at (a_t) {};
    \node[circle,fill=lowerregion,inner sep=1pt, label=above:$b_t$] at (b_t) {};

    \draw[black, thick,->] (xt) -- (proj) node[midway,above right] {$p_t$};

    \node[circle,fill,inner sep=1pt, label=left:$x_t$] at (xt) {};
    \node[circle,fill,inner sep=1pt, label=left:$\PP_{S_t}(x_t)$] at (proj) {};

    \node at (origin) {$S_t$};
    \node[] at ($(origin)+(-0.5,-2.4)$) {$S_{t-1}$};

    \coordinate (aw) at ([yshift=-10pt]a_t);
    \coordinate (bw) at ([yshift=-10pt]b_t);
    
    \draw[dashed, thin] (a_t) -- (aw);
    \draw[dashed, thin] (b_t) -- (bw);
    
    \draw[<->, thin] (aw) -- (bw) node[midway, below] {$w_t$};

  \end{scope}
\end{tikzpicture}
\caption{Regions $S_t$, $S_{t-1}$, half-plane $H_t$, the line $\ell_t$, and intersection points $a_t$, $b_t$.}
\label{fig:schematic_1}
\end{figure}
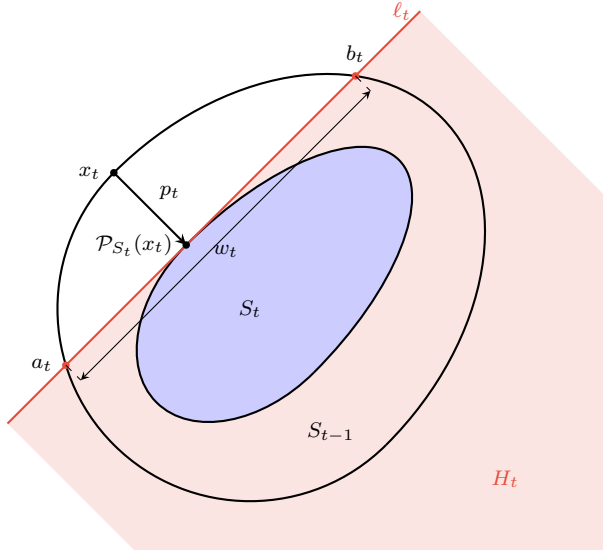

We note the following proposition which immediately follows since $\ell_t$ is perpendicular to $p_t$ and $\PP_{S_t}(x_t)$ lies on the boundary of $S_t$.

\begin{proposition}
$\ell_t$ is a supporting hyperplane (which in 2D is a line) for $S_t$ at $\PP_{S_t}(x_t)$.
\label{prop:supp_hyperplane}
\end{proposition}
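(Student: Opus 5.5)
The proposition to prove is that $\ell_t$ supports $S_t$ at $\PP_{S_t}(x_t)$. Equivalently, $S_t$ lies entirely in the closed half-space $H_t$ and $\PP_{S_t}(x_t)\in \ell_t \cap S_t$. The plan is to use the variational (obtuse-angle) characterization of Euclidean projection onto a closed convex set. Write $q_t = \PP_{S_t}(x_t)$, so that $p_t = q_t - x_t \neq 0$, since we only consider indices with $\|p_t\|>0$. By definition $q_t\in S_t$, and $q_t \in \ell_t$ because $p_t\cdot q_t \ge p_t\cdot q_t$ holds with equality. It therefore remains to show that every $z\in S_t$ satisfies $p_t\cdot z \ge p_t \cdot q_t$.

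\textbf{Step 1: first-order optimality of the projection.} Fix any $z\in S_t$ and $\lambda\in(0,1]$. By convexity of $S_t$ (an intersection of the convex sets $\GG_\tau$), the point $q_t+\lambda(z-q_t)$ lies in $S_t$. Since $q_t$ minimizes $\|x_t-y\|^2$ over $S_t$, we get
\begin{align*}
\|x_t - q_t - \lambda(z-q_t)\|^2 \ge \|x_t-q_t\|^2 .
\end{align*}
Expanding gives $-2\lambda (x_t-q_t)\cdot(z-q_t) + \lambda^2\|z-q_t\|^2 \ge 0$. Dividing by $\lambda$ and letting $\lambda\downarrow 0$ yields $(x_t-q_t)\cdot(z-q_t)\le 0$, that is, $p_t\cdot(z-q_t)\ge 0$.

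\textbf{Step 2: conclude.} Step 1 says exactly that $S_t\subseteq H_t$. Together with $q_t\in S_t\cap\ell_t$, this shows that $\ell_t$ is a supporting line of $S_t$ at $q_t$. Moreover, $x_t\notin H_t$, because $p_t\cdot x_t = p_t\cdot q_t - \|p_t\|^2 < p_t\cdot q_t$. This is consistent with the definition of $H_t$ as the side of $\ell_t$ not containing $x_t$. As a consequence, $q_t$ lies on the boundary of $S_t$.

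There is no real obstacle here. The only care needed is that $S_t$ must be closed and convex, so that the projection exists, is unique, and the perturbation argument applies. Closedness follows from the continuity of the $g_\tau$ and the closedness of $\cX$, and nonemptiness follows from the feasibility assumption.
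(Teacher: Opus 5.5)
Your proof is correct and takes the same route as the paper, which asserts the proposition in one line from the projection property (that $\ell_t$ is perpendicular to $p_t$ at the boundary point $\PP_{S_t}(x_t)$). Your Step 1 supplies the variational inequality $p_t\cdot(z-\PP_{S_t}(x_t))\ge 0$ for all $z\in S_t$, which is the ingredient that actually makes $\ell_t$ supporting and which the paper's justification leaves implicit.
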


We also note the below proposition which follows from the projection property (or equivalently, from Proposition~\ref{prop:supp_hyperplane}).
\begin{proposition}
    $S_t \subseteq H_t \ \forall \ t \in [T].$
    \label{prop:S_t_in_H_t}
\end{proposition}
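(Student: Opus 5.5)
The plan is to derive the inclusion directly from the variational characterization of Euclidean projection onto a closed convex set. Throughout we work only with indices $t$ for which $\|p_t\|>0$, as the paper restricts attention to those. For such $t$ we have $x_t\notin S_t$, $p_t\neq 0$, and $H_t$ is a genuine closed half-plane.

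The first step is to record that $S_t$ is closed and convex. It is an intersection of sublevel sets of the convex, continuous (indeed $G$-Lipschitz) functions $g_\tau$ with the closed convex set $\cX$, so it is closed and convex. It is also nonempty, since it contains $\cX^\star$ by Assumption~\ref{feas-constr}. Hence $q_t := \PP_{S_t}(x_t)$ is well defined and unique.

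The second step, which is the key one, is the obtuse-angle inequality: for every $z\in S_t$,
\begin{equation*}
(x_t - q_t)\cdot(z - q_t) \le 0 .
\end{equation*}
If one wants a self-contained argument rather than citing it, it follows by taking $z\in S_t$ and $\lambda\in(0,1]$. Convexity gives $q_t+\lambda(z-q_t)\in S_t$, and minimality of $q_t$ gives $\|x_t-q_t-\lambda(z-q_t)\|^2\ge \|x_t-q_t\|^2$. Expanding, dividing by $\lambda$ and letting $\lambda\to 0^+$ yields $-2(x_t-q_t)\cdot(z-q_t)\ge 0$.

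The final step substitutes $p_t = q_t - x_t$. The inequality then reads $p_t\cdot(z-q_t)\ge 0$, that is, $p_t\cdot z\ge p_t\cdot \PP_{S_t}(x_t)$, which is exactly the condition $z\in H_t$. Therefore $S_t\subseteq H_t$. As a side remark, equality holds at $z=q_t\in\ell_t$, which is Proposition~\ref{prop:supp_hyperplane}.

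There is no substantive obstacle. The only points needing care are nondegeneracy ($p_t\ne 0$, so $H_t$ is well defined) and the sign convention, since $p_t$ points from $x_t$ toward $S_t$ and therefore $H_t$ is the side not containing $x_t$. Indeed $p_t\cdot x_t = p_t\cdot q_t - \|p_t\|^2 < p_t\cdot q_t$, which confirms $x_t\notin H_t$, consistent with the definition.
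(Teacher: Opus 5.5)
Your proof is correct and is the same argument the paper relies on. The paper simply says the inclusion follows from the projection property, i.e.\ the variational inequality $(x_t-\PP_{S_t}(x_t))\cdot(z-\PP_{S_t}(x_t))\le 0$ for all $z\in S_t$; you derive this inequality, rewrite it via $p_t=\PP_{S_t}(x_t)-x_t$, and also verify that $S_t$ is closed, convex and nonempty, which the paper leaves implicit.
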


\begin{rem} \label{remark:remark_on_area_perim}
Intuitively, since $S_t$'s are nested, with successive projections from sets $S_{t-1}$ to $S_t$ defining $||p_t||$, either the perimeter or the area of $S_t$ should decrease with respect to $S_{t-1}$ as a function of $||p_t||$ depending on the following two cases.
\begin{itemize}
    \item Case 1: $w_t$ in Fig.~\ref{fig:schematic_1} is small compared to $||p_t||$. In this case, the perimeter of $S_t$ decreases \textbf{sufficiently} compared to that of $S_{t-1}$. 
    \item  Case 2: $w_t$ in Fig.~\ref{fig:schematic_1} is large compared to $||p_t||$. In this case, the perimeter decrease is minimal, but complementarily, the area of $S_t$ decreases \textbf{sufficiently} compared to $S_{t-1}$. 
\end{itemize}
In the rest of the proof, we formalize this intuition by combining two complementary decreases appropriately. 
\end{rem}

\begin{rem}
A similar geometric idea was proposed in \cite{vaze2025osqrttstaticregretinstance} to bound the CCV of Algorithm \ref{coco_alg_1}, where instead of controlling the decrease in perimeter or area, the chosen metric was {\it average width} \cite{eggleston1966convexity}. By bounding the decrease in average width between $S_{t-1}$ and $S_t$ as a function of $||p_t||$,  an instance dependent bound on CCV was derived in \cite{vaze2025osqrttstaticregretinstance}. It is worth noting that when $d=2$, the average width $\displaystyle W(K) = \frac{\Perim(K)}{\pi}$ \cite{eggleston1966convexity}. The main novel idea that we bring out in this paper is that considering perimeter or average width alone is not sufficient to bound the CCV, and multiple complementary metrics are needed.
\end{rem}

\begin{definition}
\begin{equation}
\delta_t := \Perim(S_{t-1}) - \Perim(S_t)
\qquad\text{and}\qquad
\Delta_t := \Area(S_{t-1}) - \Area(S_t),    
\end{equation}
where $S_0 = \mathcal{X}.$
\end{definition}

We next note a basic result whose proof follows from Cauchy's surface area formula and is relegated to Appendix~\ref{appendix_proof_perim_subset}.

\begin{theorem}
Let $A,B \subset \mathbb{R}^2$ be bounded and closed convex sets. If $A \subseteq B$, then for $d = 2$ their perimeters satisfy
\begin{equation}
    \Perim(A) \leq \Perim(B).
\end{equation} \label{thm:perim_subset}
\end{theorem}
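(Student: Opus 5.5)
We prove Theorem~\ref{thm:perim_subset} with Cauchy's surface area formula in the plane. For a compact convex set $K \subset \mathbb{R}^2$ and a unit vector $u_\theta = (\cos\theta, \sin\theta)$, the support function is $h_K(\theta) = \max_{z \in K} z \cdot u_\theta$. The width of $K$ in direction $\theta$ is
\begin{equation*}
w_K(\theta) = h_K(\theta) + h_K(\theta+\pi),
\end{equation*}
which is the length of the orthogonal projection of $K$ onto the line spanned by $u_\theta$. Cauchy's formula in two dimensions states
\begin{equation*}
\Perim(K) = \int_0^{\pi} w_K(\theta)\, d\theta = \frac{1}{2}\int_0^{2\pi} w_K(\theta)\,d\theta .
\end{equation*}
This is the identity $W(K) = \Perim(K)/\pi$ already recalled in the remark above, so it can be invoked as the cited fact.

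Given the formula, the proof is a monotonicity argument. If $A \subseteq B$, then for every $\theta$ the maximum of $z\cdot u_\theta$ over $A$ is at most the maximum over $B$, so $h_A(\theta) \le h_B(\theta)$. The same holds at $\theta + \pi$, so adding the two gives $w_A(\theta) \le w_B(\theta)$ pointwise. Integrating over $\theta \in [0,\pi]$ then yields $\Perim(A) \le \Perim(B)$. Boundedness and closedness of $A$ and $B$ ensure the maxima are attained and the widths are finite.

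The only step needing care is the applicability of Cauchy's formula when the sets are degenerate. If $A$ is a segment, its perimeter should be read as twice its length, the boundary traversed as a closed curve, and the formula still holds in that sense. If $A$ is a point, both sides are zero. If $A$ is empty, the inequality is trivial.

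If a self-contained justification of Cauchy's formula is preferred, I would first verify it for convex polygons. For a polygon with edges of lengths $L_i$ and edge directions making angle $\phi_i$ with $u_\theta$, the width equals half the sum of the projection lengths of the edges:
\begin{equation*}
w_K(\theta) = \frac{1}{2}\sum_i L_i\,|\cos(\theta - \phi_i)|.
\end{equation*}
Integrating over $[0,\pi]$ gives $\frac12 \sum_i 2L_i = \sum_i L_i = \Perim(K)$. I would then pass to general compact convex sets by approximating them with inscribed polygons in the Hausdorff metric. Perimeter is continuous for convex sets under this convergence, and widths converge uniformly in $\theta$.

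This approximation step is the main technical obstacle. For the purposes of the paper I would simply cite the classical result, e.g.\ \cite{eggleston1966convexity}, rather than reprove it.
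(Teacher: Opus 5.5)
Your proposal is correct and follows the same route as the paper's proof. Both invoke Cauchy's formula $\Perim(K)=\frac12\int_{S^1} w_K$, observe that $A\subseteq B$ gives $h_A\le h_B$ pointwise and hence $w_A\le w_B$, and integrate. Your remarks on degenerate sets and your polygon-based sketch of Cauchy's formula go beyond the paper, which simply cites the formula, but they are consistent with it.
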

Since $S_t \subseteq S_{t - 1}$, it follows immediately that $\Area(S_t) \le \Area(S_{t - 1})$. From Theorem~\ref{thm:perim_subset} we have that $\Perim(S_t) \le \Perim(S_{t - 1})$. This yields the following proposition.
\begin{proposition} For all \(t \in [T]\), \(\delta_t \ge 0\) and \(\Delta_t \ge 0\); that is, both the perimeter and the area of \(S_t\) are non-increasing in \(t\). \label{prop:nonneg}
\end{proposition}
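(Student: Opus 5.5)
The statement that immediately precedes this point is Proposition~\ref{prop:nonneg}: for every $t$, $\delta_t \ge 0$ and $\Delta_t \ge 0$. The plan is to get both inequalities from the nesting $S_t \subseteq S_{t-1}$. Perimeter and area are each monotone under inclusion of compact convex planar sets, so nesting gives both signs directly.

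\emph{Step 1 (nesting).} By definition, $S_t = \cap_{\tau=1}^t \GG_\tau = S_{t-1} \cap \GG_t \subseteq S_{t-1}$ for $t \ge 2$. For $t=1$ we have $S_1 = \GG_1 \subseteq \cX = S_0$. Every $S_t$ is convex and closed, because it is an intersection of convex sublevel sets of continuous convex functions with the closed convex set $\cX$. It is also bounded, being contained in $\cX$, which has diameter $D$. By Assumption~\ref{feas-constr} it is nonempty.

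\emph{Step 2 (area).} Area is Lebesgue measure, and measure is monotone under inclusion. Hence $\Area(S_t) \le \Area(S_{t-1})$, i.e.\ $\Delta_t \ge 0$. This holds even when the sets are degenerate, for instance segments or points, whose area is zero.

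\emph{Step 3 (perimeter).} Apply Theorem~\ref{thm:perim_subset} with $A = S_t$ and $B = S_{t-1}$. Both are bounded, closed and convex by Step 1, and $A \subseteq B$. The theorem gives $\Perim(S_t) \le \Perim(S_{t-1})$, i.e.\ $\delta_t \ge 0$. For degenerate sets, the perimeter is understood in the Cauchy sense, namely $\int_0^\pi$ of the projection width. A segment then has perimeter twice its length, and the theorem still applies, because each projection of $A$ is contained in the corresponding projection of $B$.

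The only point that needs care is checking the hypotheses of Theorem~\ref{thm:perim_subset}: closedness and boundedness of each $S_t$, and the convention at $t=1$ that $S_0 = \cX$. Once these are verified as in Step 1, the proposition is just the two monotonicity facts applied to the nested chain.
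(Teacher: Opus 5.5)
Your proposal is correct and follows the paper's argument: the nesting $S_t \subseteq S_{t-1}$ gives $\Delta_t \ge 0$ directly by monotonicity of area, and $\delta_t \ge 0$ via Theorem~\ref{thm:perim_subset}. The additional checks you supply are details the paper leaves implicit: closedness and boundedness of each $S_t$, the convention $S_0 = \cX$, and the Cauchy-sense perimeter for degenerate sets.
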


\begin{lemma}[Area decrease] For every $t \in [T]$, if the projection cost is $\|p_{t}\|$ and the width of $S_{t - 1}$ along the line onto which the action $x_t$ is projected onto is $w_{t}$, the area decrease $\Delta_t$ satisfies
\begin{equation}
    \Delta_t \ge \frac12 \| p_t \| w_t.
\end{equation}
\label{lem:area_decrease}
\end{lemma}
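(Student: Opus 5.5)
We want to show $\Delta_t \ge \frac12 \|p_t\| w_t$. The idea is to exhibit a triangle contained in $S_{t-1}\setminus S_t$ with base $w_t$ and height $\|p_t\|$. Its area is then at most the area removed in going from $S_{t-1}$ to $S_t$.

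First we check that $x_t \in S_{t-1}$. By Algorithm \ref{coco_alg_1}, $x_t = \PP_{S_{t-1}}(y_t)$, so $x_t$ lies in $S_{t-1}$; for $t=1$ we use $S_0=\cX$. Next, $a_t$ and $b_t$ lie on $\ell_t$ and belong to $S_{t-1}$, since they are the endpoints of the chord $\ell_t\cap S_{t-1}$. By convexity of $S_{t-1}$, the triangle $\Tau_t=\Convexhull\{x_t,a_t,b_t\}$ is contained in $S_{t-1}$.

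Its base is the segment $[a_t,b_t]\subset \ell_t$, of length $w_t$. Since $p_t$ is perpendicular to $\ell_t$ and joins $x_t$ to the point $\PP_{S_t}(x_t)\in\ell_t$, the distance from $x_t$ to $\ell_t$ is exactly $\|p_t\|$. Hence
\begin{equation*}
\Area(\Tau_t)=\tfrac12\|p_t\|w_t .
\end{equation*}

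We next show that $\Tau_t$ meets $S_t$ only in a null set. Because $x_t\notin H_t$ (we restrict to $\|p_t\|>0$), the triangle lies in the closed complementary half-plane $\{z: p_t\cdot z\le p_t\cdot \PP_{S_t}(x_t)\}$. By Proposition~\ref{prop:S_t_in_H_t}, $S_t\subseteq H_t$. Therefore $\Tau_t\cap S_t\subseteq \Tau_t\cap H_t\subseteq \ell_t$, which is a segment and has zero area.

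Combining, since $S_t\subseteq S_{t-1}$,
\begin{equation*}
\Delta_t=\Area(S_{t-1}\setminus S_t)\ge \Area(\Tau_t\setminus S_t)=\Area(\Tau_t)=\tfrac12\|p_t\|w_t .
\end{equation*}

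The main point needing care is that $x_t\in S_{t-1}$ and that the chord endpoints $a_t,b_t$ exist. Nonemptiness of the chord follows because $\PP_{S_t}(x_t)\in S_t\subseteq S_{t-1}$ lies on $\ell_t$. If the chord degenerates, so that $w_t=0$, the bound is trivial by Proposition~\ref{prop:nonneg}.
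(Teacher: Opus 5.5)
Your proof is correct and follows the paper's argument: both place the triangle with vertices $x_t, a_t, b_t$ (base $w_t$, height $\|p_t\|$) inside $S_{t-1}$ and use $S_t \subseteq H_t$ to separate it from $S_t$. The differences are cosmetic. The paper takes the open triangle so that $\Tau_t \cap S_t = \emptyset$, while you take the closed triangle and note that the overlap lies in $\ell_t$ and so has zero area. You also state explicitly that $x_t \in S_{t-1}$ follows from the algorithm's final projection step, which the paper uses without comment.
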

\begin{proof}
Recalling the definitions as illustrated in Fig. \ref{fig:schematic_1}, let $\Tau_t$ be the interior of the triangle with vertices $x_t$, $a_t$, and $b_t$, i.e., $\Tau_t$ is an open triangular region as depicted in Fig.~\ref{subfig:perim}.

$\Tau_t \subseteq S_{t - 1}$ follows since $S_{t - 1}$ is convex and $x_t, a_t, b_t, \in S_{t - 1}$. However, $\Tau_t \cap S_t = \emptyset$ since, by Proposition~\ref{prop:S_t_in_H_t}, any point in $S_t$ is also in $H_t$, while any point in $\Tau_t$ is in the open half-plane opposite to $H_t$. Since $\Tau_t, S_t \subseteq S_{t - 1}$ and $\Tau_t \cap S_t = \emptyset$, we get
\begin{equation}
    \Area(S_{t - 1}) \ge \Area(S_t) + \Area(\Tau_t)
\end{equation}
Using this,
\begin{equation}
\Delta_t = \Area(S_{t-1}) - \Area(S_t) \ge \Area(\Tau_t) = \frac12 \| p_t \| w_t.
\end{equation}
\end{proof}

\begin{lemma}[Perimeter Decrease] For every $t \in [T]$,
\begin{equation}
\delta_t \ge \sqrt{\| p_t \|^2 + \frac{w_t^2}{4}} - \frac{w_t}{2}.    
\end{equation}
\label{lem:perim_decrease}
\end{lemma}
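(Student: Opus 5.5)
We plan to compare $S_t$ with a convex set sandwiched between $S_t$ and $S_{t-1}$ whose perimeter we can compute explicitly, and to invoke Theorem~\ref{thm:perim_subset} (monotonicity of perimeter under inclusion) twice. Specifically, let $K_t = S_{t-1} \cap H_t$, the part of $S_{t-1}$ cut off by the line $\ell_t$ on the side containing $S_t$. By Proposition~\ref{prop:S_t_in_H_t} and $S_t \subseteq S_{t-1}$, we have $S_t \subseteq K_t$, and $K_t$ is closed, bounded and convex. Hence Theorem~\ref{thm:perim_subset} gives $\Perim(S_t) \le \Perim(K_t)$, and so $\delta_t \ge \Perim(S_{t-1}) - \Perim(K_t)$. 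It therefore suffices to lower bound the perimeter lost when $S_{t-1}$ is truncated by $\ell_t$.

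Next we decompose the boundaries. The boundary of $K_t$ consists of the segment $[a_t,b_t]$ of length $w_t$ together with the arc $\Gamma$ of $\partial S_{t-1}$ lying in $H_t$. The boundary of $S_{t-1}$ consists of $\Gamma$ together with the complementary arc $\Gamma'$ from $a_t$ to $b_t$ lying in the closed half-plane opposite to $H_t$. Thus
\begin{equation*}
\Perim(S_{t-1}) - \Perim(K_t) = \mathrm{len}(\Gamma') - w_t .
\end{equation*}
Now $x_t \in S_{t-1}$: it is the projection onto $S_{t-1}$ chosen in the previous round, or $x_1\in\cX = S_0$. Let $C = \Convexhull(\{x_t,a_t,b_t\})$, which is contained in $S_{t-1}\setminus \mathrm{int}(H_t)$. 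The region bounded by $\Gamma'$ and $[a_t,b_t]$ is convex and contains the triangle $C$. Applying Theorem~\ref{thm:perim_subset} to these two convex sets gives
\begin{equation*}
\mathrm{len}(\Gamma') + w_t \ \ge\ \|x_t-a_t\| + \|x_t-b_t\| + w_t,
\end{equation*}
that is, $\mathrm{len}(\Gamma') \ge \|x_t-a_t\|+\|x_t-b_t\|$.

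It remains to minimize $\|x_t-a_t\|+\|x_t-b_t\|$ given that $x_t$ lies at distance $\|p_t\|$ from $\ell_t$ and $\|a_t-b_t\|=w_t$. Let $h=\|p_t\|$ and let $u,v\ge 0$ with $u+v=w_t$ be the signed offsets of $a_t$ and $b_t$ from the foot $\PP_{S_t}(x_t)$. If the foot lies between $a_t$ and $b_t$, convexity of $s\mapsto\sqrt{h^2+s^2}$ (Jensen) gives $\sqrt{h^2+u^2}+\sqrt{h^2+v^2} \ge 2\sqrt{h^2+w_t^2/4}$. The foot lies in $S_t\cap\ell_t \subseteq S_{t-1}\cap \ell_t = [a_t,b_t]$, so this case always applies. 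Hence
\begin{equation*}
\delta_t \ge 2\sqrt{\|p_t\|^2 + \frac{w_t^2}{4}} - 2w_t \cdot\tfrac12 - \ldots
\end{equation*}
Carried out carefully, the previous two steps give $\delta_t \ge 2\sqrt{\|p_t\|^2+w_t^2/4} - w_t$. This is twice the claimed bound, so the stated inequality follows a fortiori.

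The main obstacle is justifying the boundary decomposition rigorously. This includes the degenerate case where $\ell_t$ is a supporting line of $S_{t-1}$ or $a_t=b_t$. In that case $w_t=0$, and the same argument with $C$ the segment $[x_t,a_t]$ still gives $\delta_t\ge 2\|p_t\|$. A second point to check is that "perimeter" of a region bounded by a curve plus a chord is handled by Theorem~\ref{thm:perim_subset}, which it is, since both regions are compact convex sets.
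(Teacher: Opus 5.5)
Your proof is correct and essentially follows the paper's route. The same sandwich $S_t\subseteq S_{t-1}\cap H_t$ and perimeter monotonicity against the triangle $x_ta_tb_t$ give $\delta_t\ge\|x_t-a_t\|+\|x_t-b_t\|-w_t$, followed by Pythagoras using $\PP_{S_t}(x_t)\in[a_t,b_t]$; you bound the lower cap $S_{t-1}\setminus\mathrm{int}(H_t)$ from below by the triangle, while the paper bounds $\Perim(\Convexhull(U_t\cup\{x_t\}))$ from above by $\Perim(S_{t-1})$, which is the same inequality. The only substantive difference is the last step: the paper discards $\sqrt{\|p_t\|^2+(w_t^{(b)})^2}-w_t^{(b)}\ge 0$ and uses $w_t^{(a)}\le w_t/2$, whereas your Jensen step keeps both terms and gives $2\sqrt{\|p_t\|^2+w_t^2/4}-w_t$, twice the stated bound, which would only improve constants downstream (two small slips: the display ending in $\ldots$ should just read this bound, and a supporting line $\ell_t$ can contain an edge of $S_{t-1}$, so it need not give $w_t=0$, though your main argument covers that case with $\Gamma=[a_t,b_t]$).
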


\begin{figure}[!t]
\centering

\begin{subfigure}{0.48\textwidth}
\centering
\begin{tikzpicture}[scale=0.6]
  \clip (-6,-4) rectangle (6,6);

  \begin{scope}[rotate=\thetaa]
    \coordinate (origin) at (0,0);
    \coordinate (xt)     at (0,3.2);
    \coordinate (proj)   at (0,1.5);

    \def\RoutY{3.2}
    \def\RinY{1.5}
    \def\RoutLeftX{3.2}
    \def\RinLeftX{2.2}
    \def\RoutRightX{4.5}
    \def\RinRightX{3.5}

    \definecolor{lowerregion}{RGB}{231,76,60}
    \definecolor{outerhex}{RGB}{52,152,219}

    \coordinate (HtLeft)  at (-3.8, 1.5);
    \coordinate (HtRight) at (5.5, 1.5);

    \path[fill=blue!20]
      (0,\RinY) arc[start angle=90, end angle=270,
                    x radius=\RinLeftX, y radius=\RinY]
      --
      (0,-\RinY) arc[start angle=-90, end angle=90,
                     x radius=\RinRightX, y radius=\RinY]
      -- cycle;

    \draw[name path=outerLeft,  black, thick]
      (0,\RoutY) arc[start angle=90, end angle=270,
                     x radius=\RoutLeftX, y radius=\RoutY];
    \draw[name path=outerRight, black, thick]
      (0,-\RoutY) arc[start angle=-90, end angle=90,
                      x radius=\RoutRightX, y radius=\RoutY];

    \path[fill=blue!20]
      (0,\RinY) arc[start angle=90, end angle=270,
                    x radius=2.2, y radius=\RinY]
      --
      (0,-\RinY) arc[start angle=-90, end angle=90,
                    x radius=3.5, y radius=\RinY]
      -- cycle;
    \draw[black, thick] (0,\RinY) arc[start angle=90, end angle=270,
                                      x radius=2.2, y radius=\RinY];
    \draw[black, thick] (0,-\RinY) arc[start angle=-90, end angle=90,
                                      x radius=3.5, y radius=\RinY];

    \draw[name path=htline, lowerregion, thick] (HtLeft) -- (HtRight);
    \node[lowerregion, left] at (HtRight) {$\ell_t$};

    \path[name intersections={of=htline and outerLeft,  by=a_t}];
    \path[name intersections={of=htline and outerRight, by=b_t}];

    \path[dotted, fill=green!40, draw=black, thick]
      (xt) -- (a_t) -- (b_t) -- cycle;

    \node[circle,fill=lowerregion,inner sep=1pt, label=left:$a_t$]  at (a_t) {};
    \node[circle,fill=lowerregion,inner sep=1pt, label=above:$b_t$] at (b_t) {};

    \draw[black, thick,->] (xt) -- (proj) node[midway,above right] {$p_t$};
    \node[circle,fill,inner sep=1pt, label=left:$x_t$] at (xt) {};
    \node[circle,fill,inner sep=1pt, label=left:\scalebox{.7}{$\PP_{S_t}(x_t)$}] at (proj) {};

    \node at (origin) {$S_t$};
    \node at ($(origin)+(-0.5,-2.4)$) {$S_{t-1}$};
  \end{scope}
\end{tikzpicture}
\caption{For Lemma~\ref{lem:area_decrease}, note that $\Area(S_{t - 1}) - \Area(S_t) \ge \Area(\Tau_t)$, where $\Tau_t$ is the interior of the triangle with vertices $x_t$, $a_t$, and $b_t$.}
\label{subfig:perim}
\end{subfigure}
\hfill
\begin{subfigure}{0.48\textwidth}
\centering
\begin{tikzpicture}[scale=0.6]
  \clip (-6,-4) rectangle (6,6);

  \begin{scope}[rotate=\thetaa]

    \coordinate (origin) at (0,0);
    \coordinate (xt)     at (0,3.2);
    \coordinate (proj)   at (0,1.5);

    \def\RoutY{3.2}
    \def\RinY{1.5}
    \def\RoutLeftX{3.2}
    \def\RoutRightX{4.5}

    \definecolor{lowerregion}{RGB}{231,76,60}

    \coordinate (HtLeft)  at (-3.8, 1.5);
    \coordinate (HtRight) at ( 5.5, 1.5);

    \begin{scope}
      \clip
        (0,\RoutY) arc[start angle=90, end angle=270,
                       x radius=\RoutLeftX, y radius=\RoutY]
      --
        (0,-\RoutY) arc[start angle=-90, end angle=90,
                         x radius=\RoutRightX, y radius=\RoutY]
      -- cycle;

      \clip (-10,-10) rectangle (10,1.5);

      \path[fill=brown!30] (-10,-10) rectangle (10,1.5);
    \end{scope}

    \draw[name path=outerLeft,  black, thick]
      (0,\RoutY) arc[start angle=90, end angle=270,
                     x radius=\RoutLeftX, y radius=\RoutY];
    \draw[name path=outerRight, black, thick]
      (0,-\RoutY) arc[start angle=-90, end angle=90,
                      x radius=\RoutRightX, y radius=\RoutY];

    \path[fill=blue!20]
      (0,\RinY) arc[start angle=90, end angle=270,
                    x radius=2.2, y radius=\RinY]
      --
      (0,-\RinY) arc[start angle=-90, end angle=90,
                    x radius=3.5, y radius=\RinY]
      -- cycle;
    \draw[black, thick] (0,\RinY) arc[start angle=90, end angle=270,
                                      x radius=2.2, y radius=\RinY];
    \draw[black, thick] (0,-\RinY) arc[start angle=-90, end angle=90,
                                      x radius=3.5, y radius=\RinY];

    \draw[name path=htline, lowerregion, thick] (HtLeft) -- (HtRight);
    \node[lowerregion, left] at (HtRight) {$\ell_t$};

    \path[name intersections={of=htline and outerLeft,  by=a_t}];
    \path[name intersections={of=htline and outerRight, by=b_t}];

    \coordinate (a_offset) at ($(a_t) - (0,0.25)$);
    \coordinate (proj_offset_a) at ($(proj) - (0,0.25)$);

    \draw[<->,thick]
        (a_offset) -- 
        node[pos=0.8, below] {$w_t^{(a)}$} 
        (proj_offset_a);
        
    \coordinate (b_offset) at ($(b_t) - (0,0.25)$);
    \coordinate (proj_offset_b) at ($(proj) - (0,0.25)$);

    \draw[<->,thick]
        (proj_offset_b) -- node[below] {$w_t^{(b)}$} (b_offset);
        
    \path[draw=black, thick] (xt) -- (a_t) -- (b_t) -- cycle;

    \node at ($(origin)+(-1.5,-1.8)$) {$U_t = H_t \cap S_{t-1}$};

    \node[circle,fill=lowerregion,inner sep=1pt, label=left:$a_t$]  at (a_t) {};
    \node[circle,fill=lowerregion,inner sep=1pt, label=above:$b_t$] at (b_t) {};
    \draw[black, thick,->] (xt) -- (proj) node[midway,above right] {$p_t$};
    \node[circle,fill,inner sep=1pt, label=left:$x_t$] at (xt) {};
    \node[circle,fill,inner sep=1pt, label=left:\scalebox{.7}{$\PP_{S_t}(x_t)$}] at (proj) {};

    \node at (origin) {$S_t$};

  \end{scope}
\end{tikzpicture}
\caption{For Lemma~\ref{lem:perim_decrease}, region $U_t = H_t \cap S_{t-1}$. Note that $\delta_t \ge \Perim(\Convexhull (U_t \cup \{x_t \})) - \Perim(U_t)$ $=\|x_t - a_t\| + \|x_t - b_t\| - \|a_t - b_t\|.$}
\label{subfig:area}
\end{subfigure}

\caption{Schematics for the proofs of Lemma~\ref{lem:area_decrease} and~\ref{lem:perim_decrease}.}
\end{figure}

\begin{proof} Consider the set $U_t = S_{t - 1} \cap H_t$, which is convex. Since $S_t \subseteq S_{t - 1}$ and $S_t \subseteq H_t$ from Proposition~\ref{prop:S_t_in_H_t}, $S_t \subseteq U_t$ is true. From Theorem~\ref{thm:perim_subset}, we get 
\begin{equation}
    \Perim(S_t) \le \Perim(U_t).    
    \label{eqn:perim_less}
\end{equation}

Similarly, consider $\Convexhull(U_t \cup \{x_t \})$, which by definition is convex. Since $x_t \in S_{t - 1}$ and $U_t \subseteq S_{t - 1}$, $\Convexhull(U_t \cup \{x_t \}) \subseteq S_{t - 1}$. From Theorem~\ref{thm:perim_subset}, we get
\begin{equation}
    \Perim(\Convexhull(U_t \cup \{x_t \})) \le \Perim(S_{t - 1}).
    \label{eqn:perim_less_2}
\end{equation}

Using ~\eqref{eqn:perim_less} and~\eqref{eqn:perim_less_2}, we get:
\begin{equation}
    \delta_t = \Perim(S_{t - 1}) - \Perim(S_{t}) \ge \Perim(\Convexhull(U_t \cup \{x_t \})) - \Perim(U_t). \label{eqn:delta_t_defn}
\end{equation}

Consider the sets $U_t$ and $\Convexhull(U_t \cup \{x_t \})$ as shown in Fig.~\ref{subfig:area}. The boundaries of the two sets $U_t$ and $\Convexhull(U_t \cup \{x_t \})$ differ only in three segments. While $\Convexhull(U_t \cup \{x_t \})$ has the line segments $\overline{x_t a_t}$ and $\overline{x_t b_t}$ as part of the boundary, $U_t$ has $\overline{a_t b_t}$ as part of the boundary. The difference in their perimeters is thus exactly $\|x_t - a_t\| + \|x_t - b_t\| - \|a_t - b_t\|$.

Next, let $\|a_t - \PP_{S_t}(x_t)\| = w_t^{(a)}$ and 
$\|b_t - \PP_{S_t}(x_t)\| = w_t^{(b)}$, such that
$w_t^{(a)} + w_t^{(b)} = w_t$. Without loss of generality assume
$w_t^{(a)} \le w_t^{(b)}$. Note that
\begin{equation}
\|x_t - a_t\| = \sqrt{\| p_t \|^2 + \left(w_t^{(a)} \right)^2}
\qquad\text{and}\qquad
\|x_t - b_t\| = \sqrt{\| p_t \|^2 + \left(w_t^{(b)} \right)^2}.   
\label{eqn:pythagoras_defns}
\end{equation}

From \eqref{eqn:delta_t_defn}, we get:
\begin{align}\nn
    \delta_t &\ge \Perim(\Convexhull(U_t \cup \{x_t \})) - \Perim(U_t), \\ \nn
    &= \|x_t - a_t\| + \|x_t - b_t\| - \|a_t - b_t\| \\ \nn
    &\overset{(a)}{=} \sqrt{\| p_t \|^2 + \bigl(w_t^{(a)}\bigr)^2}
   + \sqrt{\| p_t \|^2 + \bigl(w_t^{(b)}\bigr)^2}
   - \bigl(w_t^{(a)} + w_t^{(b)}\bigr), \\ \nn
    &\overset{(b)}{\ge} \sqrt{\| p_t \|^2 + \bigl(w_t^{(a)}\bigr)^2} - w_t^{(a)},
\end{align}
where (a) follows from \eqref{eqn:pythagoras_defns} and (b) follows because $\sqrt{\| p_t \|^2 + (w_t^{(b)})^2} \ge w_t^{(b)}$ since $\|p_t\|^2 \ge 0$.

Now note that for any given $\| p_t \|$ and for any $u \ge 0$, the function $f(u) = \sqrt{\| p_t \|^2+u^2} - u$ decreases as $u$ increases. Since $w_t^{(a)} \le \frac{w_t}{2}$, the minimum value of $f(u)$ would happen at $u = \frac{w_t}{2}$ and we conclude that
\begin{equation}
\delta_t \ge \sqrt{\| p_t \|^2 + (w_t^{(a)})^2} - w_t^{(a)}
\ge
\sqrt{\| p_t \|^2 + \frac{w_t^2}{4}} - \frac{w_t}{2}.
\end{equation}
\end{proof}

We next lower bound the maximum of the area decrease and the perimeter decrease  by a function of $\|p_t\|$ for every value of $w_t$.

\begin{lemma}
$\forall t \in [T]$ and $\forall \alpha > 0$, 
\begin{equation}
    \max (\delta_t, \alpha \Delta_t) \ge \| p_t \|^\frac 32 \sqrt{\frac{\alpha}{D \alpha + 2}}.
\end{equation}
\label{lem:decrease_perim_area}
\end{lemma}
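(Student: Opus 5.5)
The plan is to write $p = \|p_t\|$, $w = w_t$ and $M = \max(\delta_t, \alpha\Delta_t)$, and to combine the two lower bounds already available. Lemma~\ref{lem:area_decrease} gives $\alpha\Delta_t \ge \tfrac{\alpha}{2} p w$. Lemma~\ref{lem:perim_decrease} gives $\delta_t \ge \sqrt{p^2 + w^2/4} - w/2$. The key point is that one bound is good when $w$ is large and the other when $w$ is small, so I eliminate $w$ rather than split into explicit cases.

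First, rationalize the perimeter bound:
\begin{equation*}
\delta_t \ge \frac{p^2}{\sqrt{p^2 + w^2/4} + w/2}.
\end{equation*}
Second, bound the denominator using the diameter. The segment from $x_t$ to the farther of $a_t, b_t$ has length $\sqrt{p^2 + (w_t^{(b)})^2}$, both endpoints lie in $\mathcal{X}$, and $w_t^{(b)} \ge w/2$. Hence $\sqrt{p^2 + w^2/4} \le D$. Also $w \le D$ and $p \le D$, since $x_t \in S_{t-1} \subseteq \mathcal{X}$. This yields a bound of the form $M \cdot (\text{something at most linear in } D \text{ and } w) \ge p^2$.

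Third, from the area bound, $w \le 2M/(\alpha p)$. Substituting this into the perimeter inequality removes $w$ and leaves an inequality involving only $M$, $p$, $D$ and $\alpha$. It is roughly
\begin{equation*}
p^3 \le c_1\, D\, p\, M + \frac{c_2 M^2}{\alpha}
\end{equation*}
for small constants $c_1, c_2$. Finally, I would turn this into the claimed bound $M^2 \ge p^3 \alpha/(D\alpha + 2)$. The natural route is to compare $M$ with $p^{3/2}$: if the linear term dominates, use $p \le D$ to absorb it into $D M^2$; otherwise the quadratic term alone gives $M^2 \gtrsim \alpha p^3$. Adding the two regimes should produce exactly the denominator $D\alpha + 2$, where the $2$ comes from the factor $\tfrac12$ in Lemma~\ref{lem:area_decrease}.

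The main obstacle is this last step. The linear-in-$M$ term must be converted into a term of the form $D M^2$ with the constants coming out exactly as $D\alpha + 2$. The crude bound $\sqrt{p^2 + w^2/4} + w/2 \le p + w$ seems too lossy, so I expect to need the sharper form above together with a clean case split on whether $w \le p$ or $w > p$. In the first case I would use the perimeter bound $\delta_t \ge p^2/(p + w) \ge p/2$, combined with $p \le D$ to write $p/2 \ge p^{3/2}/(2\sqrt{D})$. In the second case I would use $\alpha\Delta_t \ge \alpha p^2/2$. I would then check that in each case $M \ge p^{3/2}\sqrt{\alpha/(D\alpha + 2)}$, adjusting the threshold between the cases (for instance comparing $w$ with $\sqrt{p D}$ instead of with $p$) if the constants do not match directly.
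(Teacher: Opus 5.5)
There is a genuine gap. The proposal never actually produces the constant $\sqrt{\alpha/(D\alpha+2)}$, and your main route cannot even produce the power $\|p_t\|^{3/2}$. Write $p=\|p_t\|$ as you do.

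\textbf{The main route loses the exponent.} Replacing $\sqrt{p^2+w^2/4}$ by $D$ in step two discards exactly the information the lemma needs. At the critical width, where the two bounds cross, $w^\star=2\sqrt{p/(\alpha(2+\alpha p))}$; for $\alpha=2/D$ and $p\ll D$ this is of order $\sqrt{pD}$, so the denominator is of order $\sqrt{pD}$, not $D$. Concretely, your intermediate inequality $p^3\le DpM+M^2/\alpha$ is already satisfied by $M=p^2/D$. For $\alpha=2/D$ (the choice used in the theorem), $p^2/D$ equals $\sqrt{2p/D}$ times the target $p^{3/2}/\sqrt{2D}$, so it falls below the target whenever $p<D/2$. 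Hence no later case analysis can recover the lemma from that inequality; ``absorbing'' $DpM$ into $DM^2$ would need $p\lesssim M$, which is false. A bound $M\gtrsim p^2/D$ only gives $\sum_t\|p_t\|^2=O(D^2)$, i.e.\ CCV $O(\sqrt{T})$ --- precisely the barrier the paper breaks.

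\textbf{The fallback case split does not close the gap either.} It rests on $\delta_t\ge p^2/(p+w)$, the estimate you yourself call too lossy. With threshold $p$ and $\alpha=2/D$, case~1 ($\delta_t\ge p/2$) meets the target $p^{3/2}/\sqrt{2D}$ only for $p\le D/2$, and case~2 ($\alpha\Delta_t> p^2/D$) only for $p\ge D/2$. No threshold can rescue this estimate: at $p=\tfrac34 D$ (still $\alpha=2/D$) one has $\min_{w\ge0}\max\bigl(p^2/(p+w),\,\alpha pw/2\bigr)\approx 0.427D$, against a target $\approx 0.459D$. So the decisive step is left as ``adjust the threshold if the constants do not match.''

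\textbf{The missing idea} is to eliminate $w$ exactly, using neither the diameter nor the crude denominator; your instinct to eliminate $w$ is right, but it must be done without loss. Squaring $\delta_t+w/2\ge\sqrt{p^2+w^2/4}$ gives $\delta_t^2+\delta_t w\ge p^2$. Since $0\le\delta_t\le M$, this implies $M^2+Mw\ge p^2$. Inserting $w\le 2M/(\alpha p)$ from Lemma~\ref{lem:area_decrease} (for $p>0$; the case $p=0$ is trivial) yields
\begin{equation*}
M^2\Bigl(1+\frac{2}{\alpha p}\Bigr)\ge p^2
\quad\Longleftrightarrow\quad
M\ge p^{3/2}\sqrt{\frac{\alpha}{2+\alpha p}}\ \ge\ p^{3/2}\sqrt{\frac{\alpha}{2+\alpha D}},
\end{equation*}
with the diameter entering only through $p\le D$ at the very end. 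This is equivalent to the paper's argument. The paper observes that the perimeter bound is decreasing in $w$ and the area bound increasing, so their maximum is at least their common value $p^{3/2}\sqrt{\alpha/(2+\alpha p)}$ at the crossing point $w^\star$. Equivalently, a case split at threshold exactly $w^\star$, using the exact perimeter bound, works; the thresholds you propose combined with the crude bound do not.
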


\begin{proof}
Combining the perimeter and area bounds from Lemma~\ref{lem:area_decrease} and~\ref{lem:perim_decrease}, we obtain
\begin{equation} \label{eqn:max_exp}
\max(\delta_t, \alpha \Delta_t) \ge \max\!\left(\sqrt{\| p_t \|^2 + \frac{w_t^2}{4}} - \frac{w_t}{2}, \frac{\alpha}{2} \| p_t \| w_t \right).
\end{equation}

In the above expression, for a fixed $\|p_t\|$, the first term in the maximum decreases as $w_t$ increases while the second term increases as $w_t$ increases. Thus, the minimum value of the maximum occurs precisely when $w_t$ is such that the two arguments are equal. This happens when
\begin{equation}
\sqrt{\| p_t \|^2 + \frac{w_t^2}{4}} - \frac{w_t}{2} = \frac{\alpha}{2} \| p_t \| w_t.    
\end{equation}
Solving this equation yields a unique solution of $w_t = 2\sqrt{\frac{\| p_t \|}{\alpha (2 + \alpha \| p_t \|)}}.$ Substituting this value back into either term of \eqref{eqn:max_exp} gives
\begin{equation}
\max(\delta_t, \alpha \Delta_t) \ge \| p_t \|^{3/2} \sqrt{\frac{\alpha}{2 + \alpha \| p_t \|}}.
\end{equation}
Finally, using $\| p_t \| \le D$, we obtain the desired result.
\end{proof}

\subsection{Completing the Proof of Theorem \ref{thm:theorem_CCV_bound}} 
\begin{proof} Recall that $S_0 = \mathcal{X}$ has a diameter of $D$. Since $S_t \subseteq S_0$ for all $t$, the total perimeter decrease satisfies
\begin{equation}
\sum_{t=1}^T \delta_t = \Perim(S_0) - \Perim(S_T) \le \Perim(S_0) \le \pi D,    \label{eqn:perim_bound}
\end{equation}
and the total area decrease satisfies
\begin{equation}
\sum_{t=1}^T \Delta_t = \Area(S_0) - \Area(S_T) \le \Area(S_0) \le \tfrac{\pi}{4} D^2. \label{eqn:area_bound}
\end{equation}
Using Proposition~\ref{prop:nonneg} and Equations \eqref{eqn:perim_bound} and \eqref{eqn:area_bound}, we get
\begin{equation}
\sum_{t=1}^T \max(\delta_t,\alpha \Delta_t) \le \left(\sum_{t=1}^T \delta_t \right) + \alpha \left(\sum_{t=1}^T \Delta_t \right) \le \pi D+ \frac{\alpha\pi}{4} D^2. \label{eqn:max_bound}
\end{equation}
We choose $\alpha = \tfrac{2}{D}$, which ensures that the upper bound on $\sum_{t=1}^T \max(\delta_t,\alpha \Delta_t)$ in \eqref{eqn:max_bound} is linear in $D$. Thus, we get
\begin{equation}
\sum_{t=1}^T \max(\delta_t,\alpha \Delta_t) \le \frac 32 \pi D.
\end{equation}
Lemma~\ref{lem:decrease_perim_area} gives
\begin{equation}
\max(\delta_t,\alpha \Delta_t) \ge \| p_t \|^{3/2} \sqrt{\frac{\alpha}{2 + D\alpha}} = \frac{1}{\sqrt{2D}} \| p_t \|^{3/2}. \label{eqn:max_lower_bound}
\end{equation}
Summing up \eqref{eqn:max_lower_bound} over $t \in [T]$ and using \eqref{eqn:max_bound} gives
\begin{equation}
\frac{1}{\sqrt{2D}} \sum_{t=1}^T \| p_t \|^{3/2}
\le \sum_{t=1}^T \max(\delta_t,\alpha\Delta_t) \le \frac 32 \pi D.
\end{equation}
Thus, we have $\sum_{t=1}^T \| p_t \|^{3/2} \le \frac 32 \sqrt{2} \pi  D^{3/2}$. Next, we use H\"{o}lder's inequality, which states that for $p,q \ge 1$ with $\tfrac{1}{p} + \tfrac{1}{q} = 1$, we have
$\sum_{t=1}^T |\alpha_t \beta_t| \le \|\alpha\|_p \|\beta\|_q$. Applying this with $\alpha_t = \| p_t \|$, $\beta_t = 1$, $p=\tfrac{3}{2}$, and $q=3$ gives
\begin{align}
\sum_{t=1}^T \| p_t \| &= \sum_{t=1}^T \| p_t \| \cdot 1
\le \left(\sum_{t=1}^T \| p_t \|^{3/2}\right)^{2/3} \left(\sum_{t=1}^T 1^3\right)^{1/3} = T^{1/3}\!\left(\sum_{t=1}^T \| p_t \|^{3/2}\right)^{2/3}.
\end{align}
Substituting the above expression into~\eqref{eqn:CCV_dist_relation} yields
\begin{equation}
\mathrm{CCV}_{[1:T]} \le
\left(\frac 32 \sqrt{2} \pi \right)^{2/3} G T^{1/3} D.    
\end{equation}
\end{proof}
\section{Conclusions}

In this work, we have showed for the first time that static regret of  $O(\sqrt{T})$ and CCV of $O(T^\frac 13)$ can be achieved simultaneously for COCO, even though our result is limited to when $d=2$. The best known result in prior work even for $d=2$ had static regret of  $O(\sqrt{T})$ and CCV of $O(\sqrt{T})$, with the prevailing wisdom that that is the best possible simultaneous guarantee for any $d\ge 2$. We achieved this fundamentally improved result by considering two `complementary' geometric metrics such that at least one of them decreases sufficiently when we take projections from arbitrary point on to nested convex sets of finite diameter. To the best of our knowledge this is a novel idea in the literature of COCO.

An immediate question is: can we do something similar for $d \ge 3$. Even though conceptually there are multiple complementary metrics that can be considered for $d\ge 3$, e.g. $d - 1$-dimensional surface area  $d$-dimensional volume,  average width etc., however, there seems to be no metric which maintains a linear relationship with the average width which was the case in $d = 2$.
Thus, the resulting bounds from these obvious metrics give CCV guarantee that is worse than $O(\sqrt{T})$. Thus, we need more sophisticated arguments for $d\ge 3$. 

For $d=2$, in fact we want to conjecture that the CCV is $O(1)$ for Algorithm \ref{coco_alg_1}. The derived result in this paper treats each time slot independently, and fails to exploit the sequential nature of the projections on to nested convex sets which seem to keep the CCV constant. To get a better bound on the CCV, amortizing the projection vectors ($p_t$'s) over time is likely necessary.

\bibliographystyle{abbrvnat}
\bibliography{main.bib} 

\begin{thebibliography}{18}
\providecommand{\natexlab}[1]{#1}
\providecommand{\url}[1]{\texttt{#1}}
\expandafter\ifx\csname urlstyle\endcsname\relax
  \providecommand{\doi}[1]{doi: #1}\else
  \providecommand{\doi}{doi: \begingroup \urlstyle{rm}\Url}\fi

\bibitem[Eggleston(1966)]{eggleston1966convexity}
H.~G. Eggleston.
\newblock Convexity, 1966.

\bibitem[Guo et~al.(2022)Guo, Liu, Wei, and Ying]{guo2022online}
H.~Guo, X.~Liu, H.~Wei, and L.~Ying.
\newblock Online convex optimization with hard constraints: Towards the best of two worlds and beyond.
\newblock \emph{Advances in Neural Information Processing Systems}, 35:\penalty0 36426--36439, 2022.

\bibitem[Hazan(2019)]{Hazanbook}
E.~Hazan.
\newblock Introduction to online convex optimization.
\newblock \emph{CoRR}, abs/1909.05207, 2019.
\newblock URL \url{http://arxiv.org/abs/1909.05207}.

\bibitem[Jenatton et~al.(2016)Jenatton, Huang, and Archambeau]{jenatton2016adaptive}
R.~Jenatton, J.~Huang, and C.~Archambeau.
\newblock Adaptive algorithms for online convex optimization with long-term constraints.
\newblock In \emph{International Conference on Machine Learning}, pages 402--411. PMLR, 2016.

\bibitem[Liakopoulos et~al.(2019)Liakopoulos, Destounis, Paschos, Spyropoulos, and Mertikopoulos]{georgios-cautious}
N.~Liakopoulos, A.~Destounis, G.~Paschos, T.~Spyropoulos, and P.~Mertikopoulos.
\newblock Cautious regret minimization: Online optimization with long-term budget constraints.
\newblock In \emph{International Conference on Machine Learning}, pages 3944--3952. PMLR, 2019.

\bibitem[Mahdavi et~al.(2012)Mahdavi, Jin, and Yang]{mahdavi2012trading}
M.~Mahdavi, R.~Jin, and T.~Yang.
\newblock Trading regret for efficiency: online convex optimization with long term constraints.
\newblock \emph{The Journal of Machine Learning Research}, 13\penalty0 (1):\penalty0 2503--2528, 2012.

\bibitem[Neely(2010)]{neely2010stochastic}
M.~J. Neely.
\newblock Stochastic network optimization with application to communication and queueing systems.
\newblock \emph{Synthesis Lectures on Communication Networks}, 3\penalty0 (1):\penalty0 1--211, 2010.

\bibitem[Neely and Yu(2017)]{neely2017online}
M.~J. Neely and H.~Yu.
\newblock Online convex optimization with time-varying constraints.
\newblock \emph{arXiv preprint arXiv:1702.04783}, 2017.

\bibitem[Sinha and Vaze(2024)]{Sinha2024}
A.~Sinha and R.~Vaze.
\newblock Optimal algorithms for online convex optimization with adversarial constraints.
\newblock In \emph{The Thirty-eighth Annual Conference on Neural Information Processing Systems}, 2024.
\newblock URL \url{https://openreview.net/forum?id=TxffvJMnBy}.

\bibitem[Sinha and Vaze(2025)]{sinha2025tildeosqrtt}
A.~Sinha and R.~Vaze.
\newblock Beyond $\tilde{O}(\sqrt{T})$ constraint violation for online convex optimization with adversarial constraints, 2025.

\bibitem[Sun et~al.(2017)Sun, Dey, and Kapoor]{pmlr-v70-sun17a}
W.~Sun, D.~Dey, and A.~Kapoor.
\newblock Safety-aware algorithms for adversarial contextual bandit.
\newblock In \emph{International Conference on Machine Learning}, pages 3280--3288. PMLR, 2017.

\bibitem[Tsukerman and Veomett(2016)]{cauchy_surface_area_theorem}
E.~Tsukerman and E.~Veomett.
\newblock A simple proof of cauchy's surface area formula, 2016.
\newblock URL \url{https://arxiv.org/abs/1604.05815}.

\bibitem[Vaze and Sinha(2025)]{vaze2025osqrttstaticregretinstance}
R.~Vaze and A.~Sinha.
\newblock $o(\sqrt{T})$ static regret and instance dependent constraint violation for constrained online convex optimization, 2025.
\newblock URL \url{https://arxiv.org/abs/2502.05019}.

\bibitem[Yi et~al.(2021)Yi, Li, Yang, Xie, Chai, and Johansson]{yi2021regret}
X.~Yi, X.~Li, T.~Yang, L.~Xie, T.~Chai, and K.~Johansson.
\newblock Regret and cumulative constraint violation analysis for online convex optimization with long term constraints.
\newblock In \emph{International Conference on Machine Learning}, pages 11998--12008. PMLR, 2021.

\bibitem[Yi et~al.(2023)Yi, Li, Yang, Xie, Hong, Chai, and Johansson]{yi2023distributed}
X.~Yi, X.~Li, T.~Yang, L.~Xie, Y.~Hong, T.~Chai, and K.~H. Johansson.
\newblock Distributed online convex optimization with adversarial constraints: Reduced cumulative constraint violation bounds under slater's condition.
\newblock \emph{arXiv preprint arXiv:2306.00149}, 2023.

\bibitem[Yu and Neely(2016)]{yu2016low}
H.~Yu and M.~J. Neely.
\newblock A low complexity algorithm with $ o (\sqrt{T}) $ regret and $ o (1) $ constraint violations for online convex optimization with long term constraints.
\newblock \emph{arXiv preprint arXiv:1604.02218}, 2016.

\bibitem[Yu et~al.(2017)Yu, Neely, and Wei]{yu2017online}
H.~Yu, M.~Neely, and X.~Wei.
\newblock Online convex optimization with stochastic constraints.
\newblock \emph{Advances in Neural Information Processing Systems}, 30, 2017.

\bibitem[Yuan and Lamperski(2018)]{yuan2018online}
J.~Yuan and A.~Lamperski.
\newblock Online convex optimization for cumulative constraints.
\newblock \emph{Advances in Neural Information Processing Systems}, 31, 2018.

\end{thebibliography}

\appendix
\section{Proof of Theorem~\ref{thm:perim_subset}}
\label{appendix_proof_perim_subset}

\begin{proof}
For a convex body $K \subset \mathbb{R}^2$, its support function is defined by
\begin{equation}
h_K(u) := \max_{x \in K} \langle x, u \rangle, \qquad u \in S^1,    
\end{equation}
where $S^1$ is the unit circle and $\langle \cdot,\cdot \rangle$ is the standard Euclidean inner product.

The width of $K$ in direction $u$ is
\begin{equation}
w_K(u) := h_K(u) + h_K(-u), \qquad u \in S^1.    
\end{equation}
Next, we invoke Cauchy's Surface Area formula  (see, for instance, \cite{cauchy_surface_area_theorem})
\begin{equation}
\Perim(K) = \frac{1}{2} \int_{S^1} w_K(u)\, d\theta(u),    
\end{equation}
where $d\theta$ denotes the standard Lebesgue measure on the unit circle.

Now, let $A \subseteq B$. Then for every $u \in S^1$,
\begin{equation}
h_A(u) = \max_{x \in A} \langle x,u\rangle \le \max_{x \in B} \langle x,u\rangle = h_B(u).    
\end{equation}
Similarly, for $-u$ we obtain $h_A(-u) \le h_B(-u)$, and hence
\begin{equation}
w_A(u) = h_A(u) + h_A(-u) \le h_B(u) + h_B(-u) = w_B(u) \qquad\text{for all } u \in S^1.    
\end{equation}

Using Cauchy's formula for both $A$ and $B$ and the pointwise inequality $w_A \le w_B$, we get
\begin{equation}
\Perim(A) = \frac{1}{2} \int_{S^1} w_A(u)\, d\theta(u) \le \frac{1}{2} \int_{S^1} w_B(u)\, d\theta(u) = \Perim(B).
\end{equation}
\end{proof}

\end{document}